  \def\BibTeX{{\rm B\kern-.05em{\sc i\kern-.025em b}\kern-.08em
     T\kern-.1667em\lower.7ex\hbox{E}\kern-.125emX}}
\let\theoremstyle\relax
\theoremstyle{definition}
\newtheorem{assumption}{Assumption}
\newtheorem{definition}{Definition}
\newtheorem{remark}{Remark}
\newtheorem{problem}{Problem}
\theoremstyle{plain}
\newtheorem{theorem}{Theorem}
\newtheorem{lemma}{Lemma}
\let\NAT@parse\undefined
\title{A Q-learning Approach for Adherence-Aware Recommendations}
\author{Ioannis Faros$^{1}$, \textit{Student Member, IEEE}, Aditya Dave$^{2}$, \textit{Member, IEEE},\\Andreas A. Malikopoulos$^{1,2}$, \textit{Senior Member, IEEE}
    \thanks{This research was supported by NSF under Grants CNS-2149520 and CMMI-2219761.}
    \thanks{$^{1}$Systems Engineering, Cornell University, Ithaca, NY 14850, USA.} 
    \thanks{$^{2}$School of Civil and Environmental Engineering, Cornell University, Ithaca, NY 14850, USA. {\tt\small email: \{if74,adidave,amaliko\}@cornell.edu}}
}
\date{February 2022}
\begin{document}

\maketitle
\thispagestyle{empty}
\begin{abstract}
In many real-world scenarios involving high-stakes and safety implications, a human decision-maker (HDM) may receive recommendations from an artificial intelligence while holding the ultimate responsibility of making decisions. 
In this letter, we develop an ``adherence-aware Q-learning" algorithm to address this problem. The algorithm learns the ``adherence level" that captures the frequency with which an HDM follows the recommended actions and derives the best recommendation policy in real time. We prove the convergence of the proposed Q-learning algorithm to the optimal value and evaluate its performance across various scenarios.
\end{abstract}

\begin{IEEEkeywords}
Q-learning, Markov Decision Processes, Recommender systems, Reinforcement learning.
\end{IEEEkeywords}

\section{Introduction}
Decisions driven by artificial intelligence have recently found applications in complex cyber-physical systems \cite{Malikopoulos2022a} such as transportation \cite{Nishanth2023AISmerging,Malikopoulos2020}, finance \cite{d2019promises}, and healthcare \cite{lin2021does}.
However, decisions involving high-stakes or safety-critical applications \cite{dave2023worstcase} are often ultimately taken by human decision-makers (HDMs) under advice from an artificial intelligence algorithm. 
Since this is at the discretion of the HDM, the algorithm's recommendations may not be followed at every instance of time \cite{balakrishnan2022improving}.
The phenomenon of unexpected decisions influencing the performance of such ``expert-in-the-loop" systems has garnered increasing interest in recent years, 
primarily in the fields of operations \cite{kesavan2020field}, human trust on machines \cite{logg2019algorithm}, finance \cite{d2019promises}, and healthcare services \cite{lin2021does}.

Many research efforts have focused on the factors influencing the adherence of HDMs to recommendations. In \cite{dietvorst2018overcoming}, it was established that HDMs usually prefer following recommendations that match their comfort and expertise and may ignore recommendations that contradict their opinions. The authors in \cite{logg2019algorithm} provide evidence for the hypothesis that HDMs trust their knowledge rather than an algorithm.
A similar phenomenon for partial adherence is also observed when humans recommend actions to other humans, e.g., medical advice \cite{sabate2003adherence}.
Conversely, an algorithm may not be able to account for real-life limitations faced by humans when implementing an action \cite{van2010ordering}.
Thus, an HDM may trust their judgment to navigate such situations and disregard algorithmic recommendations 
\cite{sun2022predicting}.
As a consequence of these factors, algorithmic recommendations to HDMs can perform significantly worse than anticipated \cite{sabate2003adherence}.

Two main approaches have been proposed in the literature to address performance degradation: \textit{(1) Increasing adoption:} Many research efforts have focused on increasing the adoption of recommendations among HDMs \cite{dietvorst2015algorithm}.
In \cite{bastani2021improving}, reinforcement learning techniques were utilized to increase HDM adherence, assuming they are likelier to follow recommendations close to their baseline strategy. In \cite{jacq2022lazy}, a Lazy
Markov decision processes (MDPs) formulation was proposed to improve the alignment of recommendations with the baseline strategy. 
\textit{(2) Incorporating adherence awareness:} A more recent approach is to improve recommendations by considering partial adherence within the algorithm's problem. In \cite{grand2022best}, an adherence-aware MDP was analyzed, and a value iteration algorithm was proposed to improve the performance of recommendations. 
However, this result relies upon prior knowledge of the HDM's adherence level and system dynamics. 
To bring this framework closer to real-world implementation, we require further insights into adherence-aware recommendations that can be obtained with incomplete prior knowledge of dynamics and adherence levels.

In this letter, we address this challenge by incorporating partial adherence into reinforcement learning \cite{hung2016q}, \cite{brunke2022safe} by proposing an ``adherence-aware Q-learning algorithm."
Specifically, we consider an MDP comprising an HDM that implements actions to influence the evolution of an unknown environment. 
The HDM's strategy for generating actions combines an algorithmic recommendation and a baseline strategy.
The adherence level of the HDM is not known to the algorithm a priori. 
The algorithm and the HDM share the same objective, and thus, our formulation aims to compute in real time the best-recommended actions to maximize an expected total discounted reward. 

Our main contributions in this letter are the (1) introduction of an adherence-aware Q-learning algorithm (Algorithm \eqref{eq_adh_q}) to compute an optimal control law, (2) convergence for the algorithm to the optimal Q-function (Theorem \ref{theorem1}), and (3) establishment of the advantages of our approach against baseline policies and classical Q-learning using numerical examples (Section \ref{sec:example}).

The remainder of the letter proceeds as follows. In Section \ref{sec:pf}, we present our formulation, the definitions of the adherence-aware Q-learning function, and the updating rule of the adherence level. In Section \ref{ts}, we prove the convergence of the proposed algorithm. In Section \ref{sec:example}, we demonstrate our result in an inventory control problem, and in Section \ref{sec:conclusion}, we draw concluding remarks.
\section{Modeling Framework} \label{sec:pf}

\subsection{Problem Formulation}

We consider a system comprising an artificial intelligence or algorithm recommending actions to an HDM. In turn, the HDM implements actions to influence the evolution of a dynamic environment. At each instance of time, the HDM can select to either follow the recommendation provided by the algorithm or select an action using a baseline law. 
The evolution of the state of the environment is modeled as an infinite horizon discounted MDP $\mathcal{S} = (\mathcal{X},\mathcal{U},P,R,\lambda) $, where $\mathcal{X}$ is a finite set of states and $\mathcal{U}$ is a finite set of actions. 
At any time $t\in \mathbb{N} = \{0,1,\dots\}$, the state of the system is denoted by the random variable $X_t \in \mathcal{X}$ and the action input to the MDP is the random variable $U_t \in \mathcal{U}$.
The function $P: \mathcal{X} \times \mathcal{X} \times \mathcal{U} \to [0,1]$ yields the transition probability for all $t$ as $P(x_{t+1}~|~x_t,u_t) = p(X_{t+1} = x_{t+1}~|~X_t = x_t, U_t = u_t)$ from any realized state $x_t \in \mathcal{X}$ and realized action $u_t \in \mathcal{U}$ to the next realized state $x_{t+1} \in \mathcal{X}$.
The function $R: \mathcal{X}\times \mathcal{U} \to \mathbb{R}$ yields the reward $R(X_t,U_t)$ for all $t \in \mathbb{N}$, and $\lambda \in (0,1]$ is a discount factor applied to future rewards when measuring performance.

In the MDP $\mathcal{S}$, the actions are implemented by an HDM with access to a \emph{baseline} law $g^{\text{b}}: \mathcal{X} \to \mathcal{U}$ that belongs to a set $G$ of stationary Markovian laws.
The HDM also receives an action recommendation at each instance of time from an algorithm using a \textit{recommendation} law $g^{\text{r}}: \mathcal{X} \to \mathcal{U}$ from the set of feasible laws $G$. 
Subsequently, we consider that the HDM implements the recommended action with a probability $\theta \in [0,1]$ and the baseline action with a probability $1-\theta$.
Thus, the HDM follows a mixed \textit{actual} control law given by a convex combination of the recommended and baseline laws as follows:
\begin{equation}\label{eq_actual}
    g^{\text{a}}
    = \theta {\cdot} g^{\text{r}} + (1 - \theta){\cdot}g^{\text{b}}.
\end{equation}
\begin{figure}
    \centering
    \includegraphics[scale = 0.85, keepaspectratio]{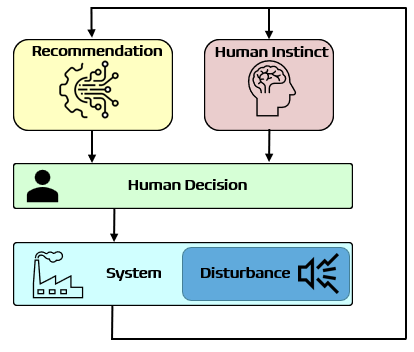}
    \caption{Process of the actual law implementation.}
    \label{fig:HDM}
\end{figure}
The process of an HDM taking an action using the baseline and the recommendation law is illustrated in Fig. \ref{fig:HDM}.
The probability $\theta$ is a representation of the \emph{adherence} level of the HDM to a recommended action, i.e., it captures the fact that an HDM does not systematically implement the latter.
The objective of an HDM is to maximize the expected total discounted reward
    \begin{equation} \label{problem}
       J(g^{\text{a}};x) = \lim_{T \to \infty} \mathbb{E}^{g^{\text{a}}}\Bigg[\sum_{t=1}^{T} \lambda^{t+1} {\cdot} R(X_t,U_t) \Big| X_1 = x \Bigg],
    \end{equation}  
where $\mathbb{E}^{g^{\text{a}}}[\cdot]$ is the expectation with respect to the distributions of all random variables generated using the actual control law.

\begin{problem} \label{problem_1}
    The objective is to derive the optimal control law $g^{r*}$ such that $g^{\text{a}*} = \theta g^{\text{r}*} + (1-\theta)g^{\text{b}}$ satisfies
    \begin{equation}
        J(g^{\text{a}*}) \geq J(g^{\text{a}}), \quad \forall g^{\text{a}} \in G^{\text{a}}(g^{\text{b}}),
    \end{equation}
where $G^{\text{a}}(g^{\text{b}})$ is the set of all possible control laws for a given baseline law $g^{\text{b}}$ and $\theta$.
\end{problem}

\begin{remark}
    The term $U_t$ refers to the action implemented by the HDM at any $t \in \mathbb{N}$. Additionally, we denote the recommended action as $U_t^{\text{r}} = g^{\text{r}}(X_t)$ and the baseline action as $U_t^{\text{b}} = g^{\text{b}}(X_t)$ for all $t$.
\end{remark}

In our modeling framework, we impose the following assumptions :

\begin{assumption}
    The stationary adherence level $\theta$ and transition matrix $P$ may be unknown. However, the baseline law $g^{\text{b}}$ is known a priori. 
\end{assumption}
The baseline law $g^{\text{b}}$ can be determined by observing the system over a finite number of instances since the sets of states and actions are finite. Thus, we consider the law to be known in our problem formulation.
\begin{assumption}\label{ass_2}
    The baseline law $g^{\text{b}}$ is deterministic.
\end{assumption}
The baseline law must be deterministic to recognize when the HDM implements the baseline action as opposed to when they implement the recommended action.
This assumption is satisfied by the optimal solutions to regular MDPs. An extension of our results for mixed baseline laws is a direction of future research.

\subsection{Preliminary Results}
    In Problem \ref{problem_1}, an optimal value function is given by the fixed point of the following Bellman-like equation for all possible states $x \in \mathcal{X}$ \cite{grand2022best}:
\begin{equation}\label{bellman}
    \begin{aligned}
         V(x) = \max_{u^{\text{r}}} \theta {\cdot} \sum_{x' \in \mathcal{X}}\Big(R(x,u^{\text{r}}) + P(x'\,|\,x, u^{\text{r}}){\cdot}V(x')\Big) + \\ (1-\theta) {\cdot} \sum_{x' \in \mathcal{X}}\Big(R(x,g^{\text{b}}(x)) + P(x'\,|\,x,g^{\text{b}}(x)){\cdot}V(x')\Big),
    \end{aligned}
\end{equation}
where $x'$ denotes a realization of the next state, $u^{\text{r}}$ the recommended action, $u^{\text{b}} =g^{\text{b}}(x) $ the base action for a given state $x \in \mathcal{X}$. 
Here, we use a time-invariant notation because \eqref{bellman} holds for all $t \in \mathbb{N}$ and all possible states in $\mathcal{X}$.
When the system dynamics and adherence level $\theta$ are known, we can use value iteration with \eqref{bellman} to derive an optimal recommendation law. 

In our formulation, we consider that these functions and quantities may be unknown. Next, we provide a Q-learning algorithm to compute the optimal recommendation law in Problem \ref{problem_1}. 

\section{Q-Learning Approach} \label{ts}
In this section, we propose an \textit{adherence-aware Q-learning} algorithm that can learn a Q-function for Problem \ref{problem_1} and subsequently use this to derive an optimal recommendation law. We also prove the convergence of the proposed algorithm to the optimal Q-function. We begin by constructing an unbiased estimate for the unknown $\theta$ using a point estimator of $\theta$ that considers the action implemented by the HDM. 
The tuple $(\theta, 1-\theta)$ can be interpreted as the probability distribution of a sequence of identical and independently distributed random variables $\{Y_t\}_{t \in \mathbb{N}}$, where each $Y_t \in \{0,1\}$ indicates whether the HDM implements the recommended action at time $t$, i.e., $Y_t = 1$ when $U_t = g^{\text{r}}(X_t)$ and $Y_t = 0$ when $U_t = g^{\text{b}}(X_t)$.
To construct this estimator for any sample size $n \in \mathbb{N}$, we first collect a sample $s^t$ comprising observations of the random variables $Y_0, \dots, Y_n$. Then, we utilize the point estimator to approximate $\theta$ as $\theta^t = s^t/n$. 
To carry out this procedure online during Q-learning, we can write the update rule for the $n+1$-th estimate $\theta_{t+1}$ of $\theta$ from the $n$-th estimate $\theta^t$ as  
    \begin{equation}
         \theta_{t+1} = \frac{\theta_t {\cdot} n + \mathbb{1}[Y_{t+1} = 1]}{n + 1},
    \end{equation}
    where $\mathbb{1}[\cdot]$ is the indicator function.
    Using standard arguments, we can show that this update rule will converge to the true $\theta$ value \cite{lehmann2006theory} in \eqref{eq_actual}, i.e.,
    \begin{equation}\label{expectation}
        \mathbb{E}[\theta_{t+1}] = \theta.
    \end{equation}
Using the update rule for $\theta$, we propose our adherence-aware Q-learning algorithm to learn the value function in \eqref{bellman} as
    \begin{equation} \label{eq_adh_q}
        \begin{aligned}
            & Q(x,u) \leftarrow  Q(x,u) + \alpha(x,u) {\cdot} \Big\{ \theta_{t+1} {\cdot}\Big[r(x,u) \\
            &+\lambda {\cdot} \max_{g^{\text{r}}}Q(x',g^{\text{r}}(x')) \Big] + (1-\theta_{t+1}) {\cdot} \Big[ r(x,u) \\
            &+ \lambda {\cdot} Q(x',g^{\text{b}}(x')) \Big] \
            - Q(x,u) \Big\},
        \end{aligned}
    \end{equation}
where $x' \in \mathcal{X}$ is the next state, $x \in \mathcal{X}$ is the current state, $u \in \mathcal{U}$ is the current action, and $\alpha(x,u)$ is the learning rate. We use time-invariant notation as this algorithm holds for all $t \in \mathbb{N}$ and all states in $\mathcal{X}$ and actions in $\mathcal{U}$. The term $Q(x,u)$ in the RHS is the current value for a given state $x$ and action $u$. The term $\max_{g^{\text{r}}}Q(x',g^{\text{r}}(x'))$ is the estimate of optimal future value, and $Q(x',g^{\text{b}}(x'))$ is the future value given the baseline law. Next, we show the adherence-aware Q-learning algorithm in procedural form.


\begin{algorithm} 
\caption{Adherence-aware Q-learning algorithm}\label{alg:cap}
\textbf{Algorithm parameters}: step size $\alpha \in (0,1]$, small $\epsilon > 0$, $\lambda \in [0,1]$.
Consider a baseline law $g^{\text{r}}(x) = u^{\text{r}}$.\\
\textbf{Initialize}: Q(x,u), for all $x \in \mathcal{X}, u \in \mathcal{U}$, arbitrarily.
\begin{algorithmic}
\For{each episode}
    \State \text{Initialize} $x$
    \For{each iteration}
        \State Choose $u$ using a law derived from $Q$ (e.g., $\epsilon$-greedy)
        \State Take action $u$, compute $r$, $x'$
        \State Update $Q$ using Equation \eqref{eq_adh_q}
        \State $x \leftarrow x'$
    \EndFor
\EndFor
\end{algorithmic}
\end{algorithm}
\vspace{-10pt}
\subsection{Proof of convergence} \label{res}
In this subsection, we prove the convergence of the adherence-aware Q-learning algorithm to the optimal value function. 
Before we prove convergence, we first define an adherence-aware operator for our Bellman-like recursion, and we prove that this is a contraction mapping. 
\begin{definition} 
    The adherence-aware operator $\mathcal{J}: [\mathcal{X}\times \mathcal{U} \to \mathbb{R}] \to  [\mathcal{X}\times \mathcal{U} \to \mathbb{R}]$ for any bounded $q: \mathcal{X}\times \mathcal{U} \to \mathbb{R}$ is
    \begin{multline}\label{oper}
            (\mathcal{J}q)(x_t,u_t) = \theta_{t+1} {\cdot} \hspace{-5pt} \max_{u_t^{\text{r}}}\sum_{X_{t+1} \in \mathcal{X}} \hspace{-5pt} P(x_{t+1} | x_t, u_t^{\text{r}})  {\cdot}\Big[ R(x_t, u_t^{\text{r}}) \\
           +\lambda {\cdot}q(x_{t+1},u_t^{\text{r}})  \Big] +(1-\theta_{t+1}) {\cdot} \hspace{-5pt} \sum_{x_{t+1} \in \mathcal{X}} P(x_{t+1} | x_t, u_t^{\text{b}}) \\
           {\cdot}\left[ R(x_t, u_t^{\text{b}})+ \lambda q(x_{t+1},u^{\text{b}}_t) \right], \quad \forall t \in \mathbb{N},
    \end{multline}
    for all possible realization $x_t \in \mathcal{X}$ and $u_t \in \mathcal{U}$, where $u^{\text{r}}= g^{\text{r}}(x_t)$ is the recommended action and $u^{\text{b}}= g^{\text{b}}(x_t)$ is the baseline action. 
   
\end{definition}
 
\begin{lemma}\label{lemma}
    The operator $\mathcal{J}$ is a contraction mapping. 
\end{lemma}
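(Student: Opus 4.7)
The plan is to show that $\mathcal{J}$ is a $\lambda$-contraction in the sup-norm $\|q\|_\infty = \max_{x \in \mathcal{X}, u \in \mathcal{U}} |q(x,u)|$. Fix two bounded functions $q_1, q_2 : \mathcal{X} \times \mathcal{U} \to \mathbb{R}$ and an arbitrary state-action pair $(x_t, u_t)$. I would then examine $(\mathcal{J}q_1)(x_t,u_t) - (\mathcal{J}q_2)(x_t,u_t)$ term by term, separating the contribution weighted by $\theta_{t+1}$ (the max over recommended actions) from the contribution weighted by $1 - \theta_{t+1}$ (the baseline part).

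For the baseline part, the reward terms $R(x_t, u_t^{\text{b}})$ do not depend on $q$ and cancel in the difference, leaving $(1-\theta_{t+1}) \cdot \lambda \sum_{x_{t+1}} P(x_{t+1}\mid x_t, u_t^{\text{b}}) \cdot [q_1(x_{t+1}, u_t^{\text{b}}) - q_2(x_{t+1}, u_t^{\text{b}})]$, which is bounded in absolute value by $(1-\theta_{t+1}) \cdot \lambda \cdot \|q_1 - q_2\|_\infty$ since $P(\cdot \mid x_t, u_t^{\text{b}})$ is a probability distribution. For the recommended part, I would apply the standard inequality $|\max_a f_1(a) - \max_a f_2(a)| \leq \max_a |f_1(a) - f_2(a)|$ with $f_i(u^{\text{r}}) = \sum_{x_{t+1}} P(x_{t+1} \mid x_t, u^{\text{r}}) \cdot [R(x_t, u^{\text{r}}) + \lambda \cdot q_i(x_{t+1}, u^{\text{r}})]$. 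Again the rewards cancel inside the absolute value, and the same probability-weighted averaging yields a bound of $\theta_{t+1} \cdot \lambda \cdot \|q_1 - q_2\|_\infty$.

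Combining the two parts and using the fact that $\theta_{t+1} + (1-\theta_{t+1}) = 1$ gives $|(\mathcal{J}q_1)(x_t,u_t) - (\mathcal{J}q_2)(x_t,u_t)| \leq \lambda \cdot \|q_1 - q_2\|_\infty$. Since $(x_t, u_t)$ was arbitrary, taking the supremum over the left-hand side yields $\|\mathcal{J}q_1 - \mathcal{J}q_2\|_\infty \leq \lambda \cdot \|q_1 - q_2\|_\infty$. Assuming $\lambda \in (0,1)$ (the standard discounted setting; the degenerate case $\lambda = 1$ would require additional structure), this establishes that $\mathcal{J}$ is a contraction mapping.

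I do not expect any serious obstacle here: the argument is a direct adaptation of the classical contraction proof for the Bellman optimality operator, with the only new ingredient being the convex combination of a maximized term and a fixed baseline term, which is harmless because both terms are individually $\lambda$-contractions and the weights $\theta_{t+1}$ and $1 - \theta_{t+1}$ are nonnegative and sum to one. The subtlest bookkeeping step will be handling the $\max$ over $u_t^{\text{r}}$ cleanly through the standard $|\max f - \max g| \leq \max |f - g|$ inequality, which is why I would carry out that part first before combining with the baseline contribution.
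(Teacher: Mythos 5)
Your proposal is correct and follows essentially the same route as the paper's proof: expand the difference $(\mathcal{J}q_1)(x_t,u_t)-(\mathcal{J}q_2)(x_t,u_t)$, note that the reward terms cancel, bound the max term and the baseline term separately, and use the fact that the transition kernels are probability distributions together with $\theta_{t+1}+(1-\theta_{t+1})=1$ to obtain the factor $\lambda$. If anything, your explicit use of the inequality $|\max_a f_1(a)-\max_a f_2(a)|\leq\max_a|f_1(a)-f_2(a)|$ and your remark that strict contraction requires $\lambda<1$ (the paper's model permits $\lambda=1$) are handled more carefully than in the paper's own write-up.
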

\begin{proof}
We prove the result using the definition of a contraction mapping, i.e., we prove that $\mathcal{J}$ satisfies
    \begin{equation}\label{lemma1_eq_1}
        \|\mathcal{J}q_1 - \mathcal{J}q_2\|_{\infty} \leq \lambda \|q_1 - q_2\|_{\infty},
    \end{equation}
    for any bounded $q: \mathcal{X}\times \mathcal{U} \to \mathbb{R}$ and for any possible realization $x_t \in \mathcal{X}$ and $u_t \in \mathcal{U}$.
   
Substituting \eqref{oper}, using the definition of the $\infty$-norm, in \eqref{lemma1_eq_1}, we obtain:
\begin{equation} \label{lemma1_eq_2}
\begin{aligned}
     &\|\mathcal{J}q_1 - \mathcal{J}q_2\|_{\infty} = 
     \max_{x,u^{\text{r}}}\Big\lvert \theta_{t+1} {\cdot} \lambda {\cdot} \hspace{-10pt} \sum_{x_{t+1} \in \mathcal{X}} \hspace{-10pt} P(x_{t+1}~|~x_t, u_t^{\text{r}}) \\
     & {\cdot} \left[ \max_{u^{\text{r}}}q_{1}(x_{t+1},u_t^{\text{r}}) \right. 
      - \left. \max_{u^{\text{r}}}q_{2}(x_{t+1},u_t^{\text{r}})\right]  +(1-\theta_{t+1}) {\cdot}\\
      &\lambda {\cdot} \sum_{x_{t+1} \in \mathcal{X}} P(x_{t+1}~|~x_t, u_t^{\text{b}}) {\cdot}
   \left[ q_1(x_{t+1},u^{\text{b}}) - q_2(x_{t+1},u^{\text{b}})\right]  \Big\lvert.
\end{aligned}
\end{equation}
From the triangle inequality, we obtain:
\begin{equation}\label{lemma1_eq_3}
    \begin{aligned}
          &\|\mathcal{J}q_1 - \mathcal{J}q_2\|_{\infty} \leq 
          \max_{x,u^{\text{r}}} \theta_{t+1} {\cdot} \lambda {\cdot} \sum_{x_{t+1} \in \mathcal{X}} P(x_{t+1}~|~x_t, u_t^{\text{r}})  \\
          & {\cdot}
          \Big\lvert\max_{s_{t+1},U^{\text{r}}_t}  q_{1}(s_{t+1},u_t^{\text{r}}) 
          -  \max_{s_{t+1},u^{\text{r}}_t}q_{2}(s_{t+1},u_t^{\text{r}})\Big\lvert +(1-\theta_{t+1}) {\cdot}\\
          &\lambda {\cdot} \sum_{X_{t+1} \in \mathcal{X}} P(x_{t+1}~|~x_t, u_t^{\text{b}}){\cdot}
        \Big\lvert q_1(x_{t+1},u^{\text{b}}) - q_2(x_{t+1},u^{\text{b}})\Big\lvert.
    \end{aligned}
\end{equation}
      Since $  \sum_{x_{t+1} \in \mathcal{X}} P(x_{t+1}~|~x_t, u_t^{\text{r}}) = 1$ and $\sum_{x_{t+1} \in \mathcal{X}} P(x_{t+1}~|~x_t, u_t^{\text{b}}) = 1$ in the RHS of \eqref{lemma1_eq_3},  $\|\mathcal{J}q_1 - \mathcal{J}q_2\|_{\infty} \leq \max_{x,u^{\text{r}}} \theta_{t+1} {\cdot} \lambda {\cdot} |q_{1}  -q_{2}| +(1-\theta_{t+1}) {\cdot} \lambda {\cdot} | q_1 -q_2 |$.
   
    We complete the proof using the definition for the $\infty$-norm to state that $\|\mathcal{J}q_1 - \mathcal{J}q_2\|_{\infty} \leq \theta_{t+1} {\cdot} \lambda {\cdot} \Big(\|q_{1} - q_{2}\|_{\infty}\Big)+(1-\theta_{t+1}) {\cdot} \lambda {\cdot} \Big(\| q_1 - q_2 \|_{\infty} \Big) = \lambda {\cdot} \| q_1 - q_2 \|_{\infty}$.
\end{proof}

Using the Banach fixed point theorem and the result of Lemma \ref{lemma}, the equation $Q = \mathcal{J}Q$ admits a unique solution $Q^{\infty} = \mathcal{J} Q^{\infty}$. Furthermore, starting at $Q^0(x_t,u_t) = 0$, the fixed point iteration around $\mathcal{J}$ generates a sequence of functions $Q^{k+1}(x_t,u_t) = \mathcal{J}Q^{k}(x_t,u_t) = \mathcal{J}^kQ^{0}(x_t,u_t)$ for all $k = 1,\dots, $ such that $lim_{k \to \infty} \mathcal{J}^kQ^{0} = Q^{\infty}$. 
Next, we prove that the operator is monotone and then use this to establish that the fixed point $Q^{\infty}$ is the optimal value.
\begin{lemma}\label{lemma2}
    Consider the maximum value over all control laws in Problem \ref{problem_1} given by
    \begin{gather} \label{Q_opt}
        Q^* = \max_{g^{\text{a}}} \mathbb{E}^{g^{\text{a}}}\Big[\sum_{t=1}^{\infty}\lambda^{t-1}R(X_t,U_t)~|~ X_1 = x\Big].
    \end{gather}
    Then, the fixed point solution $Q^{\infty} = \mathcal{J} Q^{\infty}$ also satisfies $Q^{\infty} = Q^*$.
\end{lemma}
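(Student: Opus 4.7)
My plan is to first establish the monotonicity of $\mathcal{J}$ flagged in the paragraph preceding the lemma, and then combine it with the contraction property from Lemma~\ref{lemma} to identify the unique fixed point $Q^{\infty}$ with the optimal infinite-horizon value $Q^{*}$ defined in \eqref{Q_opt}. The bridge between the two will be the standard device of unrolling the fixed-point iteration $Q^{k+1} = \mathcal{J}Q^{k}$ starting from $Q^{0} \equiv 0$ and recognizing each iterate as a finite-horizon optimum.

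For monotonicity, I would take any $q_{1}, q_{2}$ with $q_{1}(x,u) \leq q_{2}(x,u)$ for all $(x,u)\in \mathcal{X}\times \mathcal{U}$ and inspect \eqref{oper}. The operator is a convex combination with non-negative weights $\theta_{t+1}$ and $1-\theta_{t+1}$ of two pieces: a max over $u^{\text{r}}$ of a transition-probability-weighted sum linear in $q$, and a transition-probability-weighted sum linear in $q$ evaluated at the (fixed) baseline action. Both pointwise $\max$ and non-negative linear combinations preserve pointwise order, so $\mathcal{J}q_{1} \leq \mathcal{J}q_{2}$.

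Next, I would prove by induction on $k$ that $Q^{k} := \mathcal{J}^{k} 0$ equals the optimal expected $k$-step discounted reward $Q^{*}_{k}$ attainable by actual laws of the form \eqref{eq_actual}. The base case $k=1$ follows by direct computation, since choosing $u^{\text{r}}$ to maximize the first bracket in \eqref{oper} gives exactly the best one-step expected reward under the $\theta$/$(1-\theta)$ mixing. For the inductive step, I would unroll the expectation in \eqref{problem} for a $(k+1)$-horizon truncation using the tower property, condition on whether the HDM implements the recommended or baseline action at time $t=1$ (an independent $\theta$-biased coin flip), and apply the induction hypothesis to the tail; the result matches $\mathcal{J}Q^{*}_{k}$ term-for-term, yielding $Q^{*}_{k+1} = \mathcal{J}Q^{*}_{k}$. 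Passing to the limit, Lemma~\ref{lemma} and the Banach fixed point theorem give $Q^{k} \to Q^{\infty}$, while the finiteness of $\mathcal{X}, \mathcal{U}$ (hence bounded $R$) together with $\lambda \in (0,1)$ give $Q^{*}_{k} \to Q^{*}$ by dominated convergence, so $Q^{\infty} = Q^{*}$.

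The hard part will be executing the induction step rigorously. The subtlety is that the maximization in \eqref{Q_opt} is over actual laws $g^{\text{a}}$, i.e., jointly over all stages, whereas \eqref{oper} performs a stagewise maximization over $u^{\text{r}}$. I would need to invoke the standard interchange of supremum and expectation that holds because $P$, $R$, $\theta$, and $g^{\text{b}}$ are stationary and because the adherence coin flips are independent across stages; together these imply the problem admits a stationary Markov optimal recommendation law, so stagewise and trajectory-wise maximization coincide. Care is also needed with the discount indexing, since \eqref{Q_opt} starts at $t=1$ with prefactor $\lambda^{t-1}$ while the operator pulls one factor of $\lambda$ out per application; aligning these conventions is mechanical but easy to mis-book-keep.
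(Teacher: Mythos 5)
Your proposal follows essentially the same route as the paper: both identify the iterates $\mathcal{J}^{k}Q^{0}$ (starting from $Q^{0}\equiv 0$) with the optimal finite-horizon truncations $Q^{*}_{T}$ and then pass to the limit using bounded rewards and the discount factor. The paper simply asserts the identity $Q^{*}_{T}=\mathcal{J}^{k}Q^{0}$ without argument, so your induction --- together with the monotonicity observation and the care about interchanging the maximization over laws with the stagewise maximization --- supplies exactly the detail the paper's proof omits.
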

\begin{proof}
    Suppose that $R(x_t,u_t)$ is bounded by $C\in\mathbb{R}$ for all possible states and actions,
    and consider a finite truncation as $Q^*_T = \max_{g^{\text{a}}} \mathbb{E}^{g^{\text{a}}}\Big[\sum_{t=1}^{T}\lambda^{t-1}R(X_t,U_t)|X_1 = x \Big]$. Case (1), by construction, the truncate $Q^*_T$ is sub-optimal w.r.t. $Q^*$, i.e., $Q^* \geq Q^*_T$. However, $Q^*_T$ satisfies $Q^*_T = \mathcal{J}^kQ^0$, and hence as $k \to \infty$ we can write $Q^* \geq Q^*_T = \mathcal{J}^kQ^0 = Q^{\infty}$. In Case (2), we use the assumption that the reward is bounded, and hence we can write $Q^* \leq Q^*_T + \sum_{t = T}^{\infty} \lambda^T C$ for all $T$. Taking the $\lim_{T \to \infty} $, we get $Q^* \leq Q^*_T = Q^{\infty} $. From Cases (1) and (2), we conclude that $Q^{\infty} = Q^*$.
\end{proof}

Next, we prove the convergence of Algorithm \ref{alg:cap} in the tabular setting with finite-valued random variables.
\begin{theorem}
     Given a finite MDP $(\mathcal{X},\mathcal{U},P,R,\lambda)$, the adherence-aware Q-learning algorithm, defined by the update rule \eqref{eq_adh_q}
      converges with probability 1 to the optimal $Q^*$ in \eqref{Q_opt} under the following conditions for all $(x,u) \in \mathcal{X} \times \mathcal{U}$:
        \begin{align}
            \text{1)} \quad &\sum_t \alpha_t(x,u) = \infty, \\
            \text{2)} \quad &\sum_t \alpha_t(x,u)^2 < \infty, \\
            \text{3)} \quad &R(x, u^\text{r}) \text{ and } R(x, u^{\text{b}}) \text{ are bounded.}
        \end{align}
        \label{theorem1}
\end{theorem}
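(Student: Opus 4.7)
The plan is to cast the update rule \eqref{eq_adh_q} in the standard form of a stochastic approximation iteration and invoke the classical convergence theorem for such iterations (in the spirit of Jaakkola--Jordan--Singh / Tsitsiklis). The contraction property established in Lemma \ref{lemma} together with the identification of the fixed point in Lemma \ref{lemma2} will then deliver convergence to $Q^*$.

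Concretely, I would first define the error process $\Delta_t(x,u) = Q_t(x,u) - Q^*(x,u)$ and rewrite \eqref{eq_adh_q} as
\begin{equation*}
\Delta_{t+1}(x,u) = \bigl(1-\alpha_t(x,u)\bigr)\Delta_t(x,u) + \alpha_t(x,u)\,F_t(x,u),
\end{equation*}
where, after adding and subtracting $Q^*(x,u)$, the noise term $F_t$ is the difference between the one-step adherence-aware Bellman backup sampled at $(X_{t+1},\theta_{t+1})$ and $Q^*(x,u)$. The goal is then to check the three sufficient hypotheses of the stochastic approximation lemma: (i) the step-size conditions, which are assumed in 1) and 2); (ii) a contraction-in-expectation bound $\|\mathbb{E}[F_t\mid \mathcal{H}_t]\|_\infty \le \lambda \|\Delta_t\|_\infty$ on the appropriate history $\sigma$-algebra $\mathcal{H}_t$; and (iii) a conditional variance bound $\mathrm{Var}[F_t\mid \mathcal{H}_t] \le K(1 + \|\Delta_t\|_\infty^2)$.

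For (ii), I would take the conditional expectation of $F_t(x,u)$ over $X_{t+1}$ and the Bernoulli variable $Y_{t+1}$ driving $\theta_{t+1}$. Using \eqref{expectation} (unbiasedness of the adherence estimator) together with the independence of the sampling step and the running average $\theta_t$, the conditional expectation collapses to $(\mathcal{J}Q_t)(x,u) - Q^*(x,u) = (\mathcal{J}Q_t)(x,u) - (\mathcal{J}Q^*)(x,u)$, where the last equality uses Lemma \ref{lemma2} to identify $Q^*$ as the fixed point of $\mathcal{J}$. Applying Lemma \ref{lemma} then yields $\|\mathbb{E}[F_t\mid \mathcal{H}_t]\|_\infty \le \lambda \|\Delta_t\|_\infty$ with $\lambda < 1$. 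For (iii), condition 3) on bounded rewards gives $|R|\le C$, and since $\theta_{t+1}\in[0,1]$ and the $Q$-values remain bounded in terms of $\|\Delta_t\|_\infty$, each term appearing in $F_t$ is bounded in absolute value by an affine function of $\|\Delta_t\|_\infty$, yielding the required variance bound.

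The step I expect to be the main obstacle is (ii), because the noise $F_t$ depends on $\theta_{t+1}$, which is itself a random running estimate rather than the true $\theta$. I would handle this by conditioning first on $\mathcal{H}_t$ (which determines the sample count and past observations), noting that $\theta_{t+1}$ is measurable with respect to $\mathcal{H}_t$ together with the single new observation $Y_{t+1}$, and that $Y_{t+1}$ is independent of $X_{t+1}$ given $X_t$ (Assumption \ref{ass_2} is crucial here since the deterministic baseline makes $Y_{t+1}$ a genuine indicator of which branch is taken). Splitting the conditional expectation into the $\theta_{t+1}=1$ and $\theta_{t+1}=0$ branches and averaging recovers exactly the structure of $\mathcal{J}$ in \eqref{oper}, up to a lower-order term that vanishes as $\theta_t\to\theta$ by \eqref{expectation}; absorbing this asymptotically negligible term into the noise preserves the contraction bound. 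Once (i)--(iii) are verified, the stochastic approximation theorem yields $\Delta_t\to 0$ with probability one, i.e., $Q_t\to Q^*$, completing the proof.
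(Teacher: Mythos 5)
Your proposal follows essentially the same route as the paper's proof: both rewrite the update as a stochastic approximation iteration in the error $\Delta_t = Q_t - Q^*$, verify the contraction-in-expectation bound $\|\mathbb{E}[F_t\mid\mathcal{F}_t]\|_\infty \le \lambda\|\Delta_t\|_\infty$ by using the unbiasedness of the adherence estimator \eqref{expectation} together with Lemmas \ref{lemma} and \ref{lemma2} to identify the conditional mean of the backup with $(\mathcal{J}Q_t) - (\mathcal{J}Q^*)$, and then bound the conditional variance via the bounded-reward condition. Your explicit attention to the fact that $\theta_{t+1}$ is a random running estimate (handled by conditioning on the history and the new Bernoulli observation) is, if anything, slightly more careful than the paper's treatment of the same point, but the argument is the same.
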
  
\begin{proof}
    For any $t \in \mathbb{N}$, we write the $t+1$-th iteration of the $Q$-update rule \eqref{eq_adh_q} as:    
    \begin{multline}
             Q_{t+1}(x,u) = (1-\alpha_t(x,u))Q_t(x,u) \\
             + \alpha_t(x,u) \Big\{ \theta_{t+1} 
             \left[\max_{u^{\text{r}}}(R(x,u^{\text{r}}) + \lambda {\cdot} Q_t(x',u^{\text{r}}))\right] \\
             + (1-\theta_{t+1}) \left[ R(x,u^{\text{b}}) + \lambda {\cdot} Q_t(x',u^{\text{b}}) \right] \Big\},
    \end{multline}
    where recall that $u^b = g^b(x)$.
    By subtracting $Q^*(x,u)$ from both sides, we get:
    \begin{multline}
            \Delta_{t+1}(x,u) = (1-\alpha_t(x_t,u_t)) {\cdot} \Delta_{t}(x,u)
            + \alpha_t(x,u) \\
            {\cdot} \Big\{ \theta_{t+1} {\cdot}
            \left[\max_{u^{\text{r}}}\big(R(x,u^{\text{r}}) + \lambda {\cdot}Q_t(x',u^{\text{r}})\big)\right] 
            + (1-\theta_{t+1}) \\
            {\cdot} \left[ R(x,u^{\text{b}}) + \lambda {\cdot} Q_t(x',u^{\text{b}}) \right] - Q^*(x,u)\Big\}, \label{eq:}
    \end{multline}
    where $\Delta_t(x,u) = Q_t(x,u) - Q^*(x,u)$.
    Let 
    \begin{align}
    F_t(x,u,x')  =  \theta_{t+1} {\cdot} \big[\max_{u^{\text{r}}}\big(R(x,u^{\text{r}}) + \lambda Q_t(x',u^{\text{r}})\big)\big]  \nonumber\\
    + (1-\theta_{t+1}) {\cdot}\big[ R(x,u^{\text{b}}) + \lambda {\cdot} Q_t(x',u^{\text{b}}) \big] - Q^*(x,u).\label{eq:Ftfunction}
    \end{align}
        
    Next, we prove that the expectation and the variance of $F_t(x,u,x')$ are bounded above.
    By taking the expectation in both sides in \eqref{eq:Ftfunction} given the history $\mathcal{F}_t = \{\Delta_t, \Delta_{t-1},\dots, F_{t-1}, \dots \}$, we obtain
    \begin{equation}
        \begin{aligned}
            &\mathbb{E}[ F_t(X,U,X')~\big|~\mathcal{F}_t,x,u] = 
            \mathbb{E}\Big\{\theta_{t+1} {\cdot} \Big[ \max_{u^{\text{r}}}\big(R(X,U^{\text{r}}) \\
            &+ \lambda {\cdot} Q_t(X',U^{\text{r}})\big)~\big|~\mathcal{F}_t,x,u \Big] \Big\}  
            +\mathbb{E}\Big\{(1-\theta_{t+1}) {\cdot} \Big[ R(X,U^{\text{b}}) \\
            &+ \lambda {\cdot} Q_t(X',U^{\text{b}})~\big|~\mathcal{F}_t,x,u \Big]\Big\} - Q^*(x,u).
        \end{aligned}
    \end{equation}
    From Lemma \ref{lemma}, the update rule for $\theta_{t+1}$ is independent of state and action so that we can write
    \begin{equation}
        \begin{aligned}
            &\mathbb{E}[ F_t(X,U,X')~\big|~\mathcal{F}_t,x,u] = 
            \mathbb{E}[\theta_{t+1}~\big|~\mathcal{F}_t] \\
            &{\cdot}\mathbb{E} \Big[\max_{u^{\text{r}}}\big(R(X,U^{\text{r}}) 
            + \lambda {\cdot}Q^n(X',U^{\text{r}})\big)~\big|~\mathcal{F}_t,x,u\Big]  \\
            &+\mathbb{E}[(1-\theta_{t+1})~|~\mathcal{F}_t] {\cdot} \mathbb{E}\Big[ R(X,U^{\text{b}}) \\ & +\lambda {\cdot}Q_t(X',U^{\text{b}})~\big|~\mathcal{F}_t,x,u \Big] - Q^*(x,u).
        \end{aligned}
    \end{equation}
   
    From \eqref{expectation}, the last equation becomes
    \begin{equation}
        \begin{aligned}
               &\mathbb{E}[ F_t(X,U,X')~\big|~\mathcal{F}_t,x,u] = 
                \theta{\cdot} \mathbb{E} \Big[\max_{u^{\text{r}}}\big(R(X,U^{\text{r}}) \\
                &+ \lambda {\cdot}Q_t(X',U^{\text{r}})\big)~\big|~\mathcal{F}_t,x,u\Big] 
                +(1-\theta){\cdot} \mathbb{E}\Big[ R(X,U^{\text{b}}) \\
                &+ \lambda{\cdot} Q_t(X',U^{\text{b}})~\big|~\mathcal{F}_t,x,u \Big] - Q^*(x,u).
        \end{aligned}
    \end{equation}
    
    By expanding the expectations we have
    \begin{equation}
        \begin{aligned}
             & \mathbb{E}[ F_t(X,U,X')~\big|~\mathcal{F}_t,x,u] = \theta{\cdot} \max_{u^{\text{r}}} \hspace{-5pt} \sum_{x' \in \mathcal{X}} \hspace{-5pt}P(x'~|~x, u^{\text{r}}){\cdot}
                \\
               &\Big[ \big(R(x,u^{\text{r}})+ \lambda Q_t(x',u^{\text{r}})\big)\Big] 
                +(1-\theta){\cdot} \hspace{-5pt} \sum_{x' \in \mathcal{X}}\hspace{-5pt} P(x'~|~x, u^{\text{b}}){\cdot} \hspace{-2pt}
                 \\
                & \Big[ R(x,u^{\text{b}}) + \lambda {\cdot}Q_t(x',u^{\text{b}}) \Big] - Q^*(x,u).
        \end{aligned}
    \end{equation}
    
    The first two terms in the RHS of the last equation form the contraction operator $\mathcal{J}$, hence from Lemma \ref{lemma2}, 
    \begin{equation} \label{exp}
        \mathbb{E}[ F_t(X,U,X')~\big|~\mathcal{F}_t,x,u] = (\mathcal{J}Q)(xu) - (\mathcal{J}Q^*)(x,u).
    \end{equation}
    To conclude the result, we use the $\infty$-norm on both sides of \eqref{exp} and the result of Lemma \ref{lemma}, and we obtain
 \begin{align}
\|\mathbb{E}[ F_t(X,U,X')~\big|~\mathcal{F}_t]\|_{\infty} &= \|(\mathcal{J}Q)(x,u) - (\mathcal{J}Q^*)(x,u)\|_{\infty} \nonumber\\
&\leq \lambda{\cdot}\|Q_t - Q^*\|_{\infty} = \lambda {\cdot} \|\Delta_t\|_{\infty}.
\end{align}
  
    Next, we show that the variance is also bounded above. By using the definition of the variance of $F_t(X,U,X')$ we write:    
    \begin{equation} \label{var_2}
        \begin{aligned}
            &var[F_t(X,U,X')~\big|~\mathcal{F}_t,x,u] = 
            \mathbb{E}\Big\{ \Big( \theta_{t+1}{\cdot} \Big(\max_{u^{\text{r}}}\big(R(X,U^{\text{r}})\\ &+\lambda {\cdot}Q_t(X',U^{\text{r}})\big)\Big) 
            +(1-\theta_{t+1}){\cdot} \Big( R(X,U^{\text{b}}) \\
            &+ \lambda{\cdot} Q_t(X,U^{\text{b}})\Big) - Q^*(X,U) \\
            &- \Big[\underbrace{(\mathcal{J}Q)(X,U) - (\mathcal{J}Q^*)(X,U)}_\text{$\mathbb{E}[ F_t(X,U,X')|\mathcal{F}_t,x,u]$} \Big]\Big)^2 ~\big|~\mathcal{F}_t,x,u\Big\}.
        \end{aligned}
    \end{equation}
    
From Lemma \ref{lemma2}, we can simplify the last term in the RHD of \eqref{var_2} as
\begin{equation}
    \begin{aligned}
        &var[F_t(X,U,X')|\mathcal{F}_t,x,u] = 
        \mathbb{E}\Big\{ \Big( \theta_{t+1} {\cdot}\Big(\max_{u^{\text{r}}}\big(R(X,U^{\text{r}})\\
        &+\lambda {\cdot}Q_t(X',U^{\text{r}})\big)\Big) 
        +(1-\theta_{t+1}){\cdot} \Big( R(X,U^{\text{b}}) \\
        &+ \lambda{\cdot} Q_t(X',U^{\text{b}})\Big)  -
        (\mathcal{J}Q)(X,U) \Big)^2 ~|~\mathcal{F}_t,x,u\Big\},
    \end{aligned}
\end{equation}
    which is the definition of the variance with variable $\theta_{t+1} {\cdot}\left(\max_{u^{\text{r}}}\big(R(X,U^{\text{r}}) + \lambda {\cdot} Q_t(X',U^{\text{r}})\big)\right) +(1-\theta_{t+1}){\cdot}( R(X,U) + \lambda {\cdot}Q_t(X',U^{\text{b}})$. 
    By using the Assumption (3), i.e., the reward. Hence its variance is bounded, i.e.,
    \begin{equation}
        var[F_t(X,U,X')~\big|~\mathcal{F}_t,x,u] \leq C{\cdot}(1+\|\Delta_t\|_{\infty})^2,
    \end{equation}
    and the proof is complete.
\end{proof}
\section{Numerical Examples} \label{sec:example}
\subsection{Inventory Example}
We consider an HDM in \textbf{inventory control} as a shop owner, where the shop provides multiple items $(K)$ for sale. The goal of the HDM is to maximize its cumulative revenue at each time step by deciding the number of items that need to be ordered based on the stochastic demand of the future. Let $x_k$ be the state of commodity $k$, where $x_k \in \{0,1,\dots, 100\}$ and $k = 1,2, \dots, K$. The action $u_k \in \{0,1,\dots, 100\}$ corresponds to the amount of the order, and $d_k \in \{0,1,\dots, 100\}$ corresponds to the amount of the demand that is stochastic and follows the uniform distribution with $B(0,100)$.
We assume that the baseline law of the HDM is according to $(s,S)$ method \cite{veinott1965computing}, where $s$ represents the threshold of which the HDM should order the amount of $S$. The state evolution for the inventory is according to $x_{k+1} = \max(0,x_k + u_k - D_k)$, and the reward is defined as the incomes of sales minus the holding cost $H$, and the ordering cost $C$. The holding cost is given by $H(x_k,u_k) = h {\cdot}\max(0, x_k - u_k)$, where $h>0$ while the ordering cost is given by $C(u_k) = c {\cdot}u_k$, where $c>0$. 
\subsection{Machine replacement}
We consider the \textbf{Machine replacemen}t MDP problem with ten states. The set of states is $\{1,2,3,4,5,6,7,8,S,L \}$ and the set of actions is $\{\text{repair}, \text{wait}\}$. Each state represents the condition of the machine, where in-state $8$ the machine is broken, and states $S, L$ represent the short and the long repair, respectively. To model this problem, we adopt the same rewards and transitions as in \cite{grand2022best}. In particular, we show the transition probabilities for both actions in Figure \ref{fig:transisiton}. Also, we define a reward of $18$ in state $S$, $20$ in state $L$, and $0$ in state $8$. All other states have a reward of $20$. We assume the baseline law is always $\text{wait}$ when the machine is not broken or in the long repair $L$. In any other case, the action is $\text{repair}$.
\begin{figure}[h!]
    \centering
    \includegraphics[width=\linewidth, keepaspectratio]{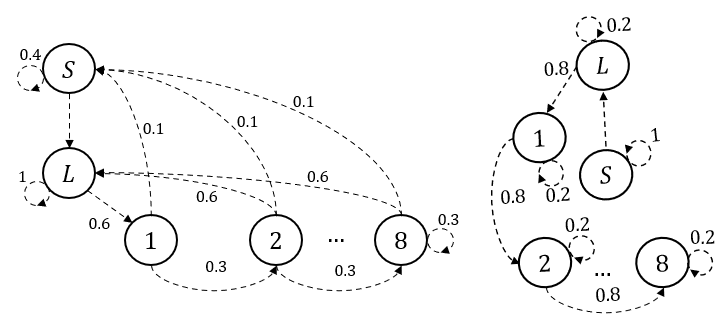}
    \caption{Transition probabilities for repair (left) and wait (right).}
    \label{fig:transisiton}
\end{figure}
    
\vspace{-3pt}
\subsection{Simulation results}
We considered the learning rate $\alpha$ and the discount factor $\lambda$ equal to $0.9$ for the simulation results. We also considered that the update rule of $\theta$ will converge to 0.7.
First, we show convergence for the two different demand distributions by tracking the initial state value for $10,000$ and $100$ time steps, respectively. In both numerical examples, the algorithm converges (Figs. \ref{fig:convergence_inv} and \ref{fig:convergence_repl}). For the inventory case, though, the algorithm needs more time to converge to optimal value due to the large state space.
Furthermore, in both scenarios, we compared our adherence-aware Q-learning algorithm with the regular Q-learning and the baseline law, as it was the only one the HDM would implement. Figures \ref{fig:rewards_inv} and \ref{fig:rewards_replacement} illustrate the result in which the average actual reward using the adherence-aware Q-learning algorithm is better than the other two. Finally, we investigated the algorithm's performance for different values of $\theta$. Figures  \ref{fig:thetas_repl} and \ref{fig:thetas_inv} illustrate the result and show that when $\theta \in [0,0.5]$ our approach is slightly better than the baseline law, while for $\theta \in [0.5,1]$ our approach outperforms the other two.
\begin{figure*}[h!]
    \centering
    \begin{subfigure}[t]{0.32\textwidth}
        \includegraphics[width=\textwidth]{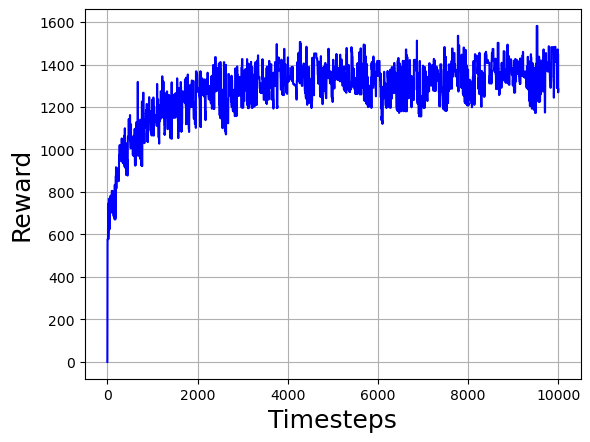}
        \caption{Convergence of the adherence-aware algorithm.}
        \label{fig:convergence_inv}
    \end{subfigure}
    \hfill
    \begin{subfigure}[t]{0.32\textwidth}
        \includegraphics[width=\textwidth]{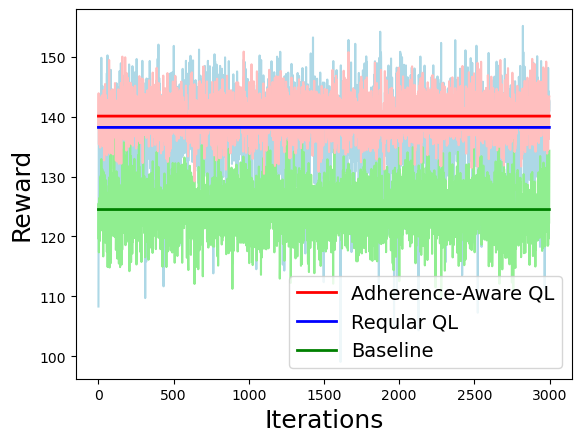}
        \caption{Actual reward for the three different approaches.}
        \label{fig:rewards_inv}
    \end{subfigure}
    \hfill
    \begin{subfigure}[t]{0.32\textwidth}
        \includegraphics[width=\textwidth]{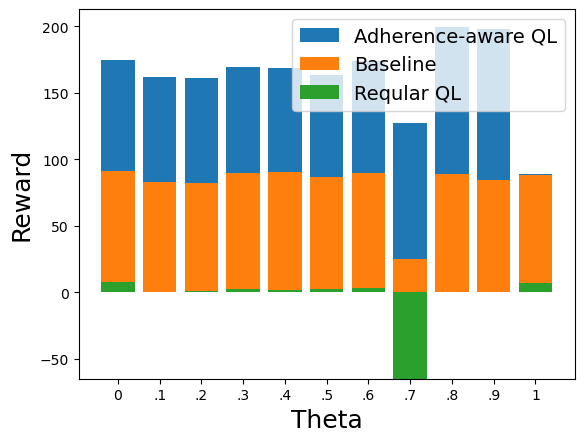}
        \caption{Influence of the parameter $\theta$.}
        \label{fig:thetas_repl}
    \end{subfigure}
    \caption{Inventory control results.}
    \label{fig:Potentials_parameters}
    \vspace{-5pt}
\end{figure*}

\begin{figure*}[h!]
    \centering
    \begin{subfigure}[t]{0.32\textwidth}
        \includegraphics[width=\textwidth]{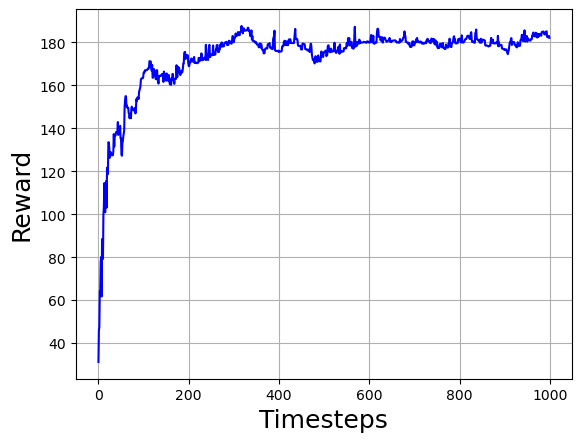}
        \caption{Convergence of the adherence-aware algorithm.}
        \label{fig:convergence_repl}
    \end{subfigure}
    \hfill
    \begin{subfigure}[t]{0.32\textwidth}
        \includegraphics[width=\textwidth]{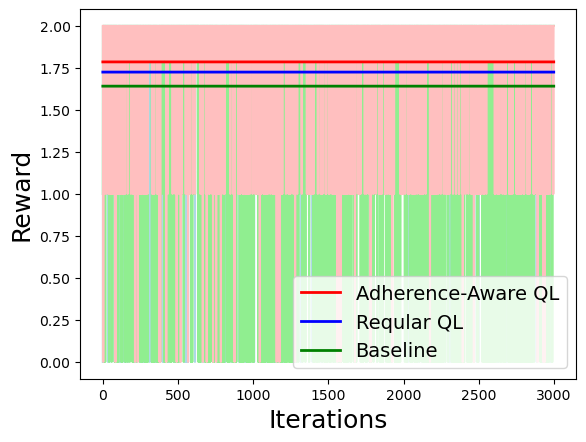}
        \caption{Actual reward for the three different approaches.}
        \label{fig:rewards_replacement}
    \end{subfigure}
    \hfill
    \begin{subfigure}[t]{0.32\textwidth}
        \includegraphics[width=\textwidth]{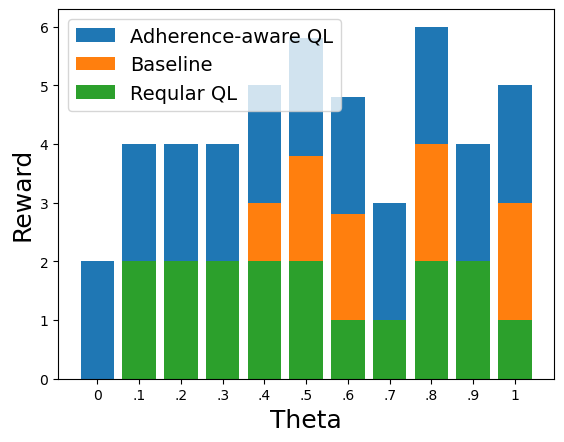}
        \caption{Influence of the parameter $\theta$.}
        \label{fig:thetas_inv}
    \end{subfigure}
    \caption{Machine replacement results.}
    \label{fig:Potentials_parameters}
    \vspace{-5pt}
\end{figure*}

\section{Concluding Remarks} \label{sec:conclusion}
In this letter, we proposed an ``adherence-aware Q-learning" designed to derive optimal recommendation actions for HDMs. Our approach considers the complexity of the problem, where both the dynamics of the environment and the level of adherence to recommendations remain unknown.
The structure of our algorithm is based on a combination of the HDM's baseline law and an update rule for estimating the adherence level to the recommendations. We proved the convergence of the adherence-aware Q-learning algorithm to the optimal value function, and we applied this algorithm to two numerical examples, illustrating its ability to converge to the optimal value and outperform alternative methods in various system scenarios.
Future work should consider situations where the baseline law is unknown and can be learned over time or the HDM has partial observability of the system state. Extending the results in a team of human-driven vehicles with a decentralized information structure \cite{Malikopoulos2021} should be also a potential direction of future research.

\bibliographystyle{IEEEtran}
\bibliography{Faros, IDS}

\begin{thebibliography}{10}
\providecommand{\url}[1]{#1}
\csname url@samestyle\endcsname
\providecommand{\newblock}{\relax}
\providecommand{\bibinfo}[2]{#2}
\providecommand{\BIBentrySTDinterwordspacing}{\spaceskip=0pt\relax}
\providecommand{\BIBentryALTinterwordstretchfactor}{4}
\providecommand{\BIBentryALTinterwordspacing}{\spaceskip=\fontdimen2\font plus
\BIBentryALTinterwordstretchfactor\fontdimen3\font minus
  \fontdimen4\font\relax}
\providecommand{\BIBforeignlanguage}[2]{{%
\expandafter\ifx\csname l@#1\endcsname\relax
\typeout{** WARNING: IEEEtran.bst: No hyphenation pattern has been}%
\typeout{** loaded for the language `#1'. Using the pattern for}%
\typeout{** the default language instead.}%
\else
\language=\csname l@#1\endcsname
\fi
#2}}
\providecommand{\BIBdecl}{\relax}
\BIBdecl

\bibitem{Malikopoulos2022a}
A.~A. Malikopoulos, ``Separation of learning and control for cyber-physical
  systems,'' \emph{Automatica}, vol. 151, no. 110912, 2023.

\bibitem{Nishanth2023AISmerging}
N.~Venkatesh, V.-A. Le, A.~Dave, and A.~A. Malikopoulos, ``Connected and
  automated vehicles in mixed-traffic: Learning human driver behavior for
  effective on-ramp merging,'' in \emph{Proceedings of the 62nd IEEE Conference
  on Decision and Control (CDC)}, 2023 (to appear, arXiv:2304.00397).

\bibitem{Malikopoulos2020}
A.~A. Malikopoulos, L.~E. Beaver, and I.~V. Chremos, ``Optimal time trajectory
  and coordination for connected and automated vehicles,'' \emph{Automatica},
  vol. 125, no. 109469, 2021.

\bibitem{d2019promises}
F.~D’Acunto, N.~Prabhala, and A.~G. Rossi, ``The promises and pitfalls of
  robo-advising,'' \emph{The Review of Financial Studies}, vol.~32, no.~5, pp.
  1983--2020, 2019.

\bibitem{lin2021does}
W.~Lin, S.-H. Kim, and J.~Tong, ``Does algorithm aversion exist in the field,''
  \emph{An empirical analysis of algorithm use determinants in diabetes
  self-management. An Empirical Analysis of Algorithm Use Determinants in
  Diabetes Self-Management (July 23, 2021)}, 2021.

\bibitem{dave2023worstcase}
A.~Dave, I.~Faros, N.~Venkatesh, and A.~A. Malikopoulos, ``Worst-case control
  and learning using partial observations over an infinite time-horizon,''
  2023.

\bibitem{balakrishnan2022improving}
M.~Balakrishnan, K.~Ferreira, and J.~Tong, ``Improving human-algorithm
  collaboration: Causes and mitigation of over-and under-adherence,''
  \emph{Available at SSRN 4298669}, 2022.

\bibitem{kesavan2020field}
S.~Kesavan and T.~Kushwaha, ``Field experiment on the profit implications of
  merchants’ discretionary power to override data-driven decision-making
  tools,'' \emph{Management Science}, vol.~66, no.~11, pp. 5182--5190, 2020.

\bibitem{logg2019algorithm}
J.~M. Logg, J.~A. Minson, and D.~A. Moore, ``Algorithm appreciation: People
  prefer algorithmic to human judgment,'' \emph{Organizational Behavior and
  Human Decision Processes}, vol. 151, pp. 90--103, 2019.

\bibitem{dietvorst2018overcoming}
B.~J. Dietvorst, J.~P. Simmons, and C.~Massey, ``Overcoming algorithm aversion:
  People will use imperfect algorithms if they can (even slightly) modify
  them,'' \emph{Management science}, vol.~64, no.~3, pp. 1155--1170, 2018.

\bibitem{sabate2003adherence}
E.~Sabat{\'e}, \emph{Adherence to long-term therapies: evidence for
  action}.\hskip 1em plus 0.5em minus 0.4em\relax World Health Organization,
  2003.

\bibitem{van2010ordering}
K.~H. Van~Donselaar, V.~Gaur, T.~Van~Woensel, R.~A. Broekmeulen, and J.~C.
  Fransoo, ``Ordering behavior in retail stores and implications for automated
  replenishment,'' \emph{Management Science}, vol.~56, no.~5, pp. 766--784,
  2010.

\bibitem{sun2022predicting}
J.~Sun, D.~J. Zhang, H.~Hu, and J.~A. Van~Mieghem, ``Predicting human
  discretion to adjust algorithmic prescription: A large-scale field experiment
  in warehouse operations,'' \emph{Management Science}, vol.~68, no.~2, pp.
  846--865, 2022.

\bibitem{dietvorst2015algorithm}
B.~J. Dietvorst, J.~P. Simmons, and C.~Massey, ``Algorithm aversion: people
  erroneously avoid algorithms after seeing them err.'' \emph{Journal of
  Experimental Psychology: General}, vol. 144, no.~1, p. 114, 2015.

\bibitem{bastani2021improving}
H.~Bastani, O.~Bastani, and W.~P. Sinchaisri, ``Improving human decision-making
  with machine learning,'' \emph{arXiv preprint arXiv:2108.08454}, 2021.

\bibitem{jacq2022lazy}
A.~Jacq, J.~Ferret, O.~Pietquin, and M.~Geist, ``Lazy-mdps: Towards
  interpretable rl by learning when to act,'' in \emph{Proceedings of the 21st
  International Conference on Autonomous Agents and Multiagent Systems}, 2022,
  pp. 669--677.

\bibitem{grand2022best}
J.~Grand-Cl{\'e}ment and J.~Pauphilet, ``The best decisions are not the best
  advice: Making adherence-aware recommendations,'' \emph{arXiv preprint
  arXiv:2209.01874}, 2022.

\bibitem{hung2016q}
S.-M. Hung and S.~N. Givigi, ``A q-learning approach to flocking with uavs in a
  stochastic environment,'' \emph{IEEE transactions on cybernetics}, vol.~47,
  no.~1, pp. 186--197, 2016.

\bibitem{brunke2022safe}
L.~Brunke, M.~Greeff, A.~W. Hall, Z.~Yuan, S.~Zhou, J.~Panerati, and A.~P.
  Schoellig, ``Safe learning in robotics: From learning-based control to safe
  reinforcement learning,'' \emph{Annual Review of Control, Robotics, and
  Autonomous Systems}, vol.~5, pp. 411--444, 2022.

\bibitem{lehmann2006theory}
E.~L. Lehmann and G.~Casella, \emph{Theory of point estimation}.\hskip 1em plus
  0.5em minus 0.4em\relax Springer Science \& Business Media, 2006.

\bibitem{veinott1965computing}
A.~F. Veinott~Jr and H.~M. Wagner, ``Computing optimal (s, s) inventory
  policies,'' \emph{Management Science}, vol.~11, no.~5, pp. 525--552, 1965.

\bibitem{Malikopoulos2021}
A.~A. Malikopoulos, ``On team decision problems with nonclassical information
  structures,'' \emph{IEEE Transactions on Automatic Control}, 2023.

\end{thebibliography}

\end{document}